\documentclass[letterpaper, 10 pt, conference]{ieeeconf}  

\IEEEoverridecommandlockouts                              
\usepackage{amsmath,amssymb,mathtools}
\usepackage{bm}

\usepackage{graphicx}
\usepackage{booktabs}
\usepackage{multirow}
\usepackage{array}
\allowdisplaybreaks

\usepackage{algorithm}
\usepackage{algpseudocode}
\usepackage{xcolor}
\usepackage{hyperref}
\usepackage{cleveref}
\usepackage{microtype}
\usepackage{caption}
\usepackage{subcaption}
\usepackage{wrapfig}
\usepackage{tikz}

\hypersetup{
     colorlinks=true,
     linkcolor=black,
     filecolor=blue,
     citecolor = blue,      
     urlcolor=blue,
     }

\newtheorem{theorem}{Theorem}[section]

\newtheorem{proposition}[theorem]{Proposition}
\newtheorem{corollary}[theorem]{Corollary}

\newtheorem{assumption}{Assumption}

\newtheorem{remark}[theorem]{Remark}

\newcommand{\RR}{\mathbb{R}}
\newcommand{\EE}{\mathbb{E}}

\newcommand{\cS}{\mathcal{S}}
\newcommand{\cA}{\mathcal{A}}
\newcommand{\cF}{\mathcal{F}}
\newcommand{\cM}{\mathcal{M}}

\newcommand{\Jr}{J_r}
\newcommand{\Jc}{J_c}
\newcommand{\gr}{\bm{g}_r}
\newcommand{\gc}{\bm{g}_c}
\newcommand{\gperp}{\bm{g}_r^{\perp}}
\newcommand{\gb}{\bm{g}_b}
\newcommand{\gup}{\bm{g}_{\mathrm{up}}}
\newcommand{\pith}{\pi_{\bm{\theta}}}
\newcommand{\pithstar}{\pi_{\bm{\theta}^*}}
\newcommand{\nablath}{\nabla_{\bm{\theta}}}
\newcommand{\dhat}{\hat{\delta}}
\newcommand{\lamt}{\lambda_t}
\newcommand{\lamstar}{\lambda^*}
\newcommand{\norm}[1]{\left\|#1\right\|}
\newcommand{\inner}[2]{\left\langle #1,\, #2 \right\rangle}

\newcommand{\clip}{\mathrm{clip}}

\definecolor{myblue}{RGB}{31,56,100}
\definecolor{mygreen}{RGB}{0,100,60}
\definecolor{mygray}{RGB}{120,120,120}
\definecolor{proofbg}{RGB}{248,248,252}

\title{\LARGE \bf
Boundary-Seeking Policy Gradient for Safe Reinforcement Learning
}

\author{Chenhua Fan$^{1\dagger}$, Jiahui Zhu$^{1\dagger}$, Yuhang Zhang$^{1}$, and Honghao Wei$^{1*}$
\thanks{$^\dagger$ contributed equally, $^*$ corresponding author}
\thanks{$^1$School of EECS, Washington State University, Pullman, WA 99163, USA
       {\tt\small (E-mails: \{chenhua.fan,jiahui.zhu, yuhang.zhang1, honghao.wei\}@wsu.edu)}}
}

\begin{document}

\maketitle
\thispagestyle{empty}
\pagestyle{empty}

\begin{abstract}
Safe reinforcement learning maximizes reward subject to safety constraints. For Constrained Markov Decision Processes, the linear-programming view over occupancy measures implies that whenever the constraint is active at optimality, the optimal policy lies exactly on the constraint boundary, yet standard gradient-based methods do not exploit this structure and often settle in the feasible interior. We introduce Boundary-Seeking Policy Gradient (BSPG), a first-order method whose update combines a tangential component that improves reward while preserving cost to first order with a signed, residual-driven normal component that regulates the policy toward the active boundary from either side; the combined direction admits an algebraic Lagrangian form with an induced coefficient and no learned dual variable. Under exact gradients and stated regularity conditions, the constraint residual converges to zero from either side with a finite-horizon $O(1/\sqrt{T})$ bound, the tangential component is a reward-ascent direction on the boundary, and any convergent parameter sequence is stationary on the active constraint set, satisfying the KKT conditions when the limit is also a local maximizer over the feasible set. This complements existing analyses, which certify feasibility but do not characterize the constraint value at convergence. On a standard Safety-Gymnasium navigation task, BSPG attains higher reward while tracking the boundary more tightly than the compared baselines.
\end{abstract}

\section{Introduction}
Reinforcement learning (RL) has achieved remarkable success in sequential decision-making tasks including game playing \cite{DavThoJul_18}, autonomous driving \cite{LiPenFen_22}, and robotics \cite{KobBagPet_13}. In safety-critical applications, however, maximizing reward alone is insufficient: a deployed policy must also satisfy operational constraints, a requirement commonly formalized through Constrained Markov Decision Processes (CMDPs) \cite{Alt_99}.

Existing constrained policy optimization follows several directions. \emph{Primal--dual methods} \cite{StoAchAbb_20,PatCalMig_19,WeiLiuYin_22,WeiGhoShr_23,ZhaPenWei_24,DinWeiYan_20,ZhuYuLee_25} enforce the constraint through a learned multiplier, so the policy's location relative to an active boundary is controlled only indirectly through the dual dynamics. \emph{Primal and projection-based methods} \cite{AchHelDav_17,YanRosNar_20,ZhaVuoRos_20,YanJiDai_22}, and switching schemes such as CRPO \cite{XuLiaLan_21}, restrict or correct updates by feasibility --- a one-sided condition that does not say how much of the remaining cost budget a feasible policy should use. \emph{Gradient-manipulation methods} \cite{GuSelDin_24,GuShiDin_24,YaoLiuCen_24} coordinate the two objectives through gradient geometry, which captures local alignment but not where the policy sits relative to the cost limit. In short, standard updates do not separate reward improvement along a cost level set from signed regulation across cost levels toward an active boundary.

This separation matters because of a well-known structural property: when the CMDP model is known, safe RL admits an exact linear-programming solution over occupancy measures \cite{Alt_99}, and whenever the constraint is active at optimality, the optimal policy lies on the boundary of the feasible set. A strictly feasible policy therefore retains cost slack that can be locally converted into reward when the reward and cost gradients are positively aligned; Section~\ref{sec:boundary} makes both the global statement and this local trade-off precise.

Motivated by this structure, we propose Boundary-Seeking Policy Gradient (BSPG), which makes signed boundary regulation an explicit, separate component of the update. A tangential term removes from the reward gradient its component along the cost gradient, preserving cost to first order while retaining reward ascent; a normal term proportional to the signed residual decreases cost when the policy is infeasible and increases cost toward the limit when it is strictly feasible. The ideal update admits an algebraic Lagrangian representation with an induced coefficient and maintains no learned dual variable. Our contributions are:
\begin{enumerate}
      \item  We characterize the boundary structure of constrained policy optimization: when no unconstrained reward maximizer is feasible, every constrained occupancy-measure maximizer uses the full cost budget, this target transfers to any policy class attaining the exact constrained optimum, and we identify when local slack supports first-order reward improvement.

    \item  We propose BSPG and, for the ideal exact-gradient update, prove two-sided residual convergence with a finite-horizon $O(1/\sqrt{T})$ bound, reward ascent of the tangential component, and a characterization of any convergent run: the limit is stationary on the active set and satisfies the KKT conditions when it is also a local maximizer over the feasible set.
\item  On a standard Safety-Gymnasium navigation task, BSPG achieves higher reward with tighter boundary adherence than the compared baselines.
 \vspace{-1ex}
\end{enumerate}

\begin{figure*}[t]
    \centering
    \includegraphics[width=0.80\textwidth]{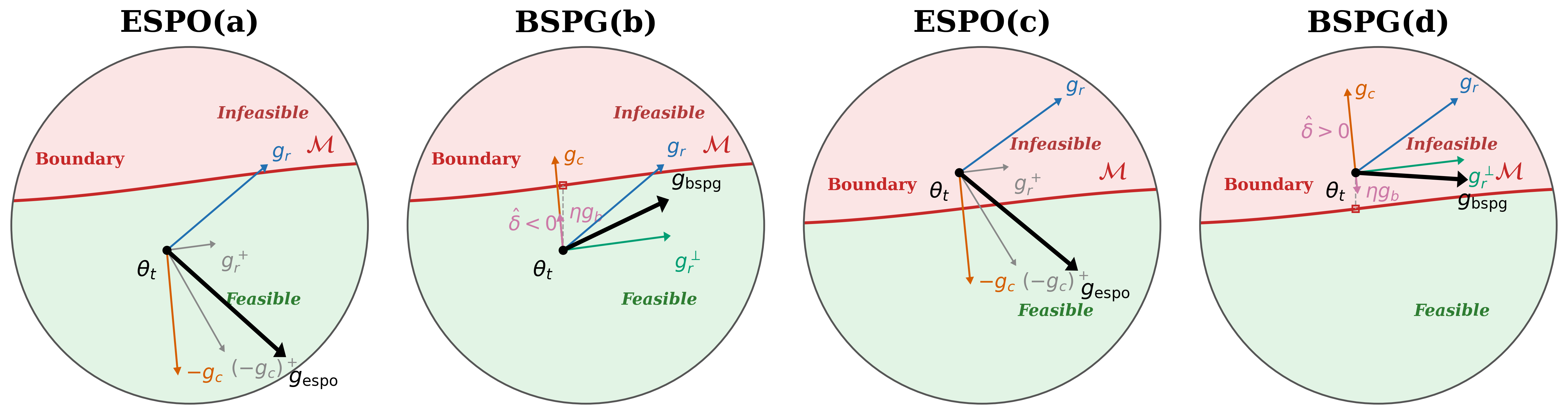}
    \caption{Local update geometry of ESPO~\cite{GuShiDin_24} and \textbf{BSPG}. ESPO shapes its direction from the reward--cost gradient relation; BSPG adds a normal component set by the signed constraint residual. For the ideal update ($\hat{\delta}=\delta$), this component points toward $\cM$ from both the feasible ($\hat{\delta}<0$) and infeasible ($\hat{\delta}>0$) sides.}
    \label{fig:grad_update}
    \vspace{-2ex}
\end{figure*}

\section{Related Work}
\label{sec:related}

\textbf{Primal--dual methods} enforce safety through a Lagrange multiplier balancing reward against violation. PPO- and TRPO-Lagrangian \cite{RayAchAmo_19} update the multiplier from the observed violation, with coupled dynamics that can be sensitive and oscillatory; PID-Lagrangian \cite{StoAchAbb_20} stabilizes the dual update at the cost of extra tuning, and RCPO \cite{TesManMan_18} and CAL \cite{WuTanLin_24} refine cost estimation and the coupled updates. In all cases the residual acts on the multiplier, which then reshapes the policy step: boundary regulation is not an explicit component of the update itself.

\textbf{Primal methods} restrict or correct updates by feasibility. CPO \cite{AchHelDav_17} solves a constrained local subproblem with guarantees but added computational cost \cite{RayAchAmo_19}; PCPO \cite{YanRosNar_20}, FOCOPS \cite{ZhaVuoRos_20}, and CUP \cite{YanJiDai_22} give first-order and projection-based variants; CRPO \cite{XuLiaLan_21} alternates reward and constraint steps by feasibility status, and CPPO \cite{XuaZhaYin_23} adds an infeasibility-recovery mechanism. Feasibility, however, is one-sided: a policy with large slack and one at the boundary satisfy the same inequality, and when slack is locally convertible into reward (Section~\ref{sec:boundary}), feasibility alone does not determine how much of the budget to use.

\textbf{Gradient-manipulation methods} are closest to our geometric view. PCRPO \cite{GuSelDin_24} selects or combines directions by the reward--cost gradient angle, GradS \cite{YaoLiuCen_24} extends this to multiple constraints, and ESPO \cite{GuShiDin_24} uses gradient interaction to improve efficiency. But gradient geometry is not boundary location: two policies with similar gradient directions can hold very different slack, so geometry alone determines neither the direction nor the magnitude of motion across cost levels. These methods can reach an active boundary, yet their updates do not use the signed residual as a separate normal correction regulating the policy toward it.

\section{Preliminaries}

We consider a CMDP defined by the tuple
$(\cS, \cA, P, r, c, \gamma, \mu_0, d)$, where $\cS$ and $\cA$ denote the state and action spaces, $P(s' \mid s,a)$ is the transition kernel, $r:\cS \times \cA \to \RR$ and $c:\cS \times \cA \to \RR$ are the reward and cost functions, $\gamma \in (0,1)$ is the discount factor, $\mu_0$ is the initial-state distribution, and $d \in \RR$ is the cost threshold. A policy $\pi:\cS \to \Delta(\cA)$ maps each state to a distribution over actions. For a policy $\pi_{\bm\theta}$ parameterized by $\bm\theta \in \RR^p$, define the expected discounted reward and cost as
\begin{align}
    \Jr(\pi_{\bm\theta})
    &= \EE_{\tau \sim \pi_{\bm\theta}}
    \!\left[\sum_{t=0}^{\infty}\gamma^t r(s_t,a_t)\right], \label{eq:Jr}\\
    \Jc(\pi_{\bm\theta})
    &= \EE_{\tau \sim \pi_{\bm\theta}}
    \!\left[\sum_{t=0}^{\infty}\gamma^t c(s_t,a_t)\right]. \label{eq:Jc}
\end{align}
Safe RL aims to solve the following constrained optimization problem:
\begin{equation}
    \max_{\bm\theta}\; \Jr(\pi_{\bm\theta})
    \;\;
    \textit{s.t.}
    \;\;
    \Jc(\pi_{\bm\theta}) \le d.
    \label{eq:cmdp}
\end{equation}
To describe the constraint geometry in policy parameter space, define
\begin{align}
  &\delta(\bm\theta)
  \triangleq
  \Jc(\pi_{\bm\theta})-d,
  &&\text{(constraint residual)},
  \label{eq:delta}\\
 & \cF
  \triangleq
  \{\bm\theta\in\RR^p:\delta(\bm\theta)\le0\},
  &&\text{(feasible parameter set)},
  \label{eq:feasible}\\
&  \cM
  \triangleq
  \{\bm\theta\in\RR^p:\delta(\bm\theta)=0\},
  &&\text{(active constraint set)}.
  \label{eq:manifold}
\end{align}
Since our method is a policy-gradient algorithm, we next introduce the reward and cost gradients that determine the local update direction. These quantities also provide the basis for the geometric decomposition developed later.
By the policy gradient theorem \cite{RicDavSat_99}, define the normalized discounted state distribution $d_\gamma^{\pi_{\bm\theta}}(s)=(1-\gamma)\sum_{t=0}^{\infty}\gamma^t\Pr_{\pi_{\bm\theta}}(s_t=s)$. Then
\begin{align}
    \gr
    &= \frac{1}{1-\gamma}\EE_{s \sim d_\gamma^{\pi_{\bm\theta}},\, a \sim \pi_{\bm\theta}}
    \!\left[
        A_r^{\pi_{\bm\theta}}(s,a)\,\nabla_{\bm\theta}\log \pi_{\bm\theta}(a\mid s)
    \right], \label{eq:gr}\\
    \gc
    &= \frac{1}{1-\gamma}\EE_{s \sim d_\gamma^{\pi_{\bm\theta}},\, a \sim \pi_{\bm\theta}}
    \!\left[
        A_c^{\pi_{\bm\theta}}(s,a)\,\nabla_{\bm\theta}\log \pi_{\bm\theta}(a\mid s)
    \right], \label{eq:gc}
\end{align}
where $A_r^{\pi_{\bm\theta}}(s,a)=Q_r^{\pi_{\bm\theta}}(s,a)-V_r^{\pi_{\bm\theta}}(s)$ is the reward advantage, and
$A_c^{\pi_{\bm\theta}}(s,a)=Q_c^{\pi_{\bm\theta}}(s,a)-V_c^{\pi_{\bm\theta}}(s)$ is the cost advantage.

The key observation during policy improvements is that the reward and cost gradients may not be aligned. For a local update $\bm\theta'=\bm\theta+\alpha\bm v$, differentiability gives $J_k(\bm\theta')-J_k(\bm\theta)=\alpha\inner{\bm g_k}{\bm v}+o(\alpha)$, so $\bm v$ improves reward to first order if $\inner{\gr}{\bm v}>0$ and reduces cost if $\inner{\gc}{\bm v}<0$. When $\inner{\gr}{\gc}>0$, a naive ascent step along $\gr$ increases both reward and cost to first order, the \textbf{gradient conflict} near an active constraint that our method is designed to handle.

\section{Boundary Optimality Principle}
\label{sec:boundary}
We begin with a basic structural observation about the CMDP in~\eqref{eq:cmdp} that motivates our algorithm design. When the constraint is active at the optimum, the optimal constrained solution should lie on the boundary of the feasible set rather than in its strict interior. Intuitively, if a feasible policy $\pi_{\bm{\theta}}$ satisfies the constraint with strict inequality, i.e., $\Jc(\pi_{\bm{\theta}}) < d$, then, by continuity of $\Jc$ in $\bm{\theta}$, there exists a neighborhood around $\bm{\theta}$ in which the constraint remains inactive. In such a region, moving toward the feasible boundary can potentially improve reward without immediately violating the constraint. The following theorem formalizes this observation.

\begin{theorem}[Boundary Optimality]
\label{thm:optimality}
Consider a finite discounted CMDP\eqref{eq:cmdp} with finite state and action spaces over the class $\Pi$ of all stationary policies. Suppose the feasible set $\{\pi\in\Pi:\Jc(\pi)\le d\}$ is non-empty and that no unconstrained maximizer of $\Jr$ over $\Pi$ is feasible, i.e., every $\pi_r^\star\in\arg\max_{\pi\in\Pi}\Jr(\pi)$ satisfies $\Jc(\pi_r^\star)>d$. Then every constrained-optimal policy $\pi^\star\in\arg\max_{\pi\in\Pi:\,\Jc(\pi)\le d}\Jr(\pi)$ satisfies $\Jc(\pi^\star)=d$.
\end{theorem}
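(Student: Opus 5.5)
The plan is to pass to the occupancy-measure linear program and use convexity of the set of achievable occupancy measures. For a stationary policy $\pi$, let $\rho_\pi(s,a)=\sum_{t\ge0}\gamma^t\Pr_\pi(s_t=s,a_t=a)$. We use the standard facts from \cite{Alt_99}. First, the map $\pi\mapsto\rho_\pi$ sends $\Pi$ onto the polytope
\[
\mathcal{D}=\Big\{\rho\ge0:\textstyle\sum_a\rho(s',a)=\mu_0(s')+\gamma\sum_{s,a}P(s'\mid s,a)\rho(s,a)\ \forall s'\Big\}.
\]
Second, every $\rho\in\mathcal{D}$ is realized by the stationary policy $\pi_\rho(a\mid s)=\rho(s,a)/\sum_{a'}\rho(s,a')$, with an arbitrary choice of $\pi_\rho(\cdot\mid s)$ when the denominator vanishes. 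Third, $\Jr(\pi)=\langle r,\rho_\pi\rangle$ and $\Jc(\pi)=\langle c,\rho_\pi\rangle$. Hence $\Jr$ and $\Jc$ are linear functions of $\rho$ on the convex set $\mathcal{D}$. The constrained problem is then equivalent to $\max\{\langle r,\rho\rangle:\rho\in\mathcal{D},\ \langle c,\rho\rangle\le d\}$, and the unconstrained one to $\max_{\rho\in\mathcal{D}}\langle r,\rho\rangle$. Both maxima are attained because $\mathcal{D}$ is compact (it is bounded since $\sum\rho=1/(1-\gamma)$) and, for the constrained problem, the feasible set is nonempty.

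We argue by contradiction. Suppose a constrained optimum $\pi^\star$ has $\Jc(\pi^\star)<d$, and write $\rho^\star=\rho_{\pi^\star}$. Pick any unconstrained maximizer $\pi_r^\star$ and set $\rho_r=\rho_{\pi_r^\star}$. By hypothesis $\langle c,\rho_r\rangle>d$. Moreover $\langle r,\rho_r\rangle>\langle r,\rho^\star\rangle$ strictly, because equality would make $\pi^\star$ itself an unconstrained maximizer, and $\pi^\star$ is feasible, contradicting the hypothesis. Now form the convex combination $\rho_\eta=(1-\eta)\rho^\star+\eta\rho_r\in\mathcal{D}$ for $\eta\in(0,1]$. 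By linearity,
\[
\langle c,\rho_\eta\rangle=\langle c,\rho^\star\rangle+\eta\big(\langle c,\rho_r\rangle-\langle c,\rho^\star\rangle\big),
\]
so the choice $\eta=(d-\langle c,\rho^\star\rangle)/(\langle c,\rho_r\rangle-\langle c,\rho^\star\rangle)\in(0,1)$ gives $\langle c,\rho_\eta\rangle=d$. For this $\eta$ we also have $\langle r,\rho_\eta\rangle=\langle r,\rho^\star\rangle+\eta(\langle r,\rho_r\rangle-\langle r,\rho^\star\rangle)>\langle r,\rho^\star\rangle$. The stationary policy $\pi_{\rho_\eta}$ is therefore feasible and strictly better than $\pi^\star$, which is a contradiction. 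So $\Jc(\pi^\star)\le d$ together with $\Jc(\pi^\star)\neq d$ being impossible yields $\Jc(\pi^\star)=d$.

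The main obstacle is the policy--occupancy correspondence. We need that every point of $\mathcal{D}$, in particular a mixture of two occupancy measures, is the occupancy measure of some stationary policy. This is what lets us avoid mixing policies directly, since $\Jr$ is not linear in $\pi$. The correspondence is exactly the classical result in \cite{Alt_99}, and we intend to invoke it rather than reprove it. If a self-contained argument is wanted, we would verify that $\pi_\rho$ induces a state marginal satisfying the same flow equation, and that this equation has a unique solution because $I-\gamma P_{\pi}^\top$ is invertible. The remaining steps are routine linear algebra.
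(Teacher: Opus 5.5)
Your proposal is correct and follows the paper's proof essentially step for step. You linearize via occupancy measures, argue that the reward gap to an unconstrained maximizer must be strict, and reach a contradiction by mixing $\rho^\star$ with that maximizer's occupancy measure. The differences are cosmetic: you use unnormalized $\rho$, and you choose $\eta$ to land exactly on $J_c=d$ rather than taking $\lambda$ small. You also spell out that the mixture is realized by the stationary policy $\pi_\rho$, a step the paper uses only implicitly.
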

\noindent\emph{Proof.} For a stationary policy $\pi$, define its (normalized) occupancy measure
$\rho_\pi(s,a)=(1-\gamma)\sum_{t=0}^\infty \gamma^t \Pr_{\pi,\mu_0}(s_t=s,a_t=a)$.
The set $\Lambda=\{\rho_\pi:\pi\in\Pi\}$ is a compact convex polytope, and
$\Jr,\Jc$ are linear functionals of $\rho$:
$J_k(\pi)=\frac{1}{1-\gamma}\sum_{s,a}\rho_\pi(s,a)\,k(s,a)$ for
$k\in\{r,c\}$ \cite{Alt_99}; hence the maxima below are attained.

Suppose, for contradiction, that some constrained-optimal $\pi^\star$
satisfies $\Jc(\pi^\star)<d$, and write $\rho^\star=\rho_{\pi^\star}$.
Fix any unconstrained maximizer $\pi_r^\star$ with occupancy measure
$\rho_r$; by hypothesis $\Jc(\rho_r)>d$. If $\Jr(\rho_r)=\Jr(\rho^\star)$,
then $\pi^\star$ would itself be a feasible unconstrained maximizer,
contradicting the hypothesis; hence $\Jr(\rho_r)>\Jr(\rho^\star)$.
For $\lambda\in(0,1]$ let $\rho_\lambda=(1-\lambda)\rho^\star+\lambda\rho_r
\in\Lambda$ (convexity). By linearity,
$\Jr(\rho_\lambda)=(1-\lambda)\Jr(\rho^\star)+\lambda\Jr(\rho_r)
>\Jr(\rho^\star)$, and
$\Jc(\rho_\lambda)=(1-\lambda)\Jc(\rho^\star)+\lambda\Jc(\rho_r)\le d$
for every
$\lambda\le\bigl(d-\Jc(\rho^\star)\bigr)/\bigl(\Jc(\rho_r)-\Jc(\rho^\star)\bigr)$,
a strictly positive threshold. Thus, for small $\lambda>0$, the policy
induced by $\rho_\lambda$ is feasible with strictly larger reward than
$\pi^\star,$ a contradiction. Hence $\Jc(\pi^\star)=d$.
\hfill$\blacksquare$

The proof, which relies only on the convexity of the occupancy-measure polytope and the linearity of $\Jr,\Jc$ in the occupancy measure \cite{Alt_99}.

\begin{corollary}[Boundary optimality under realizability]
\label{cor:realizability}
Suppose the assumptions of Theorem~\ref{thm:optimality} hold, the maximum below is attained, and
\begin{equation}
  \max_{\bm{\theta}\in\cF}\Jr(\pith)
  \;=\;
  \max_{\pi\in\Pi:\,\Jc(\pi)\le d}\Jr(\pi).
  \label{eq:realizability}
\end{equation}
Then every global maximizer of the parameterized problem \eqref{eq:cmdp} belongs to $\cM$.
\end{corollary}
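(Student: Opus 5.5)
The plan is to reduce the corollary to Theorem~\ref{thm:optimality} by showing that any global maximizer of the parameterized problem is also a constrained-optimal policy over the full class $\Pi$. The step that needs care is the implicit assumption that every parameterized policy $\pith$ is a stationary policy, i.e.\ $\pith\in\Pi$. I will state this explicitly; it holds for the standard parameterizations considered in the paper.

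Let $\bm\theta^\star$ be any global maximizer of \eqref{eq:cmdp}. The maximum is assumed attained, so $\bm\theta^\star\in\cF$, and by \eqref{eq:realizability}
\begin{equation*}
\Jr(\pi_{\bm\theta^\star}) \;=\; \max_{\bm\theta\in\cF}\Jr(\pith) \;=\; \max_{\pi\in\Pi:\,\Jc(\pi)\le d}\Jr(\pi).
\end{equation*}
Since $\bm\theta^\star\in\cF$, we have $\Jc(\pi_{\bm\theta^\star})\le d$. Since $\pi_{\bm\theta^\star}\in\Pi$, the policy $\pi_{\bm\theta^\star}$ is feasible for the unparameterized CMDP. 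It attains the constrained optimum over $\Pi$, so $\pi_{\bm\theta^\star}\in\arg\max_{\pi\in\Pi:\,\Jc(\pi)\le d}\Jr(\pi)$.

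The hypotheses of Theorem~\ref{thm:optimality} hold by assumption: finite CMDP, nonempty feasible set, and no feasible unconstrained maximizer. That theorem therefore gives $\Jc(\pi_{\bm\theta^\star})=d$. Hence $\delta(\bm\theta^\star)=0$, i.e.\ $\bm\theta^\star\in\cM$.

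The only potential obstacle is the membership $\pith\in\Pi$. Without it, $\pi_{\bm\theta^\star}$ would not be a candidate in the class to which the theorem applies. If that inclusion needed justification beyond the paper's setup, I would argue directly instead. Suppose $\Jc(\pi_{\bm\theta^\star})<d$. Because the value of $\pi_{\bm\theta^\star}$ equals the $\Pi$-constrained optimum, the convex-combination argument in the proof of Theorem~\ref{thm:optimality}, carried out on occupancy measures, produces a policy in $\Pi$ that is feasible and has strictly larger reward. This contradicts the definition of the $\Pi$-constrained maximum. That argument, however, still uses that $\rho_{\pi_{\bm\theta^\star}}$ lies in the polytope $\Lambda$, which is exactly the inclusion $\pith\in\Pi$. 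So I would simply state this inclusion as the standing convention.
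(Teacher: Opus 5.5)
Your proof is correct and follows the paper's argument exactly: a global maximizer attains the value in \eqref{eq:realizability}, so $\pi_{\bm\theta^\star}$ is constrained-optimal in $\Pi$, and Theorem~\ref{thm:optimality} gives $\Jc(\pi_{\bm\theta^\star})=d$, i.e., $\bm\theta^\star\in\cM$. The paper also uses the inclusion $\pith\in\Pi$ that you flag, implicitly, and it is built into its setup, since policies are defined as stationary maps $\cS\to\Delta(\cA)$; moreover, your fallback would not actually need it, because in a finite discounted CMDP the occupancy measure of any (even history-dependent) policy already lies in $\Lambda$.
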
 Indeed, any global maximizer $\bm{\theta}^\star$ attains the constrained value, so the induced policy $\pi_{\bm{\theta}^\star}$ is constrained-optimal in $\Pi$, and Theorem~\ref{thm:optimality} gives $\Jc(\pi_{\bm{\theta}^\star})=d$. Thus, nonconvex parameterization may create additional local stationary points, but it does not change the active cost level of a realizable global optimum.

Theorem~\ref{thm:optimality} provides the basic theoretical motivation for BSPG: when the constraint is active at the optimum, the constrained optimum of the underlying CMDP uses the full cost budget. Thus, in safe RL, the objective is not merely to remain feasible, but to approach and optimize along the active constraint boundary, where the optimal constrained solution is attained.

\begin{remark}[Occupancy-measure vs.\ parameter space]
\label{rem:occ-param}
Theorem~\ref{thm:optimality} is a statement about the CMDP over all stationary policies: in occupancy-measure space the feasible set is a compact convex polytope and both objectives are linear, so the boundary property follows from convexity. In the parameter space used in \eqref{eq:delta}--\eqref{eq:manifold}, $\Jr(\pith)$ and $\Jc(\pith)$ are in general nonconcave, and the global property transfers only when the policy class can represent (near-)optimal policies, e.g., tabular softmax or sufficiently expressive parameterizations. Accordingly, our algorithm uses this geometry only locally: $\cM$ serves as the target set, and the guarantees of Section~\ref{sec:theory} are first-order statements about the parameter-space iterates under Assumptions~\ref{asm:smooth}--\ref{asm:stepsize}; they do not assert that every parameter-space constrained maximizer lies on $\cM$.
\end{remark}

\begin{remark}
\label{rem:gap}
Theorem~\ref{thm:optimality} identifies $\cM$ as the target set of optimal constrained solutions, but it does not by itself quantify what an algorithm can gain locally when it stops at a strict interior point. The next proposition makes this local trade-off precise: when the reward and cost gradients are positively aligned, a strictly feasible policy can convert part of its unused cost budget into a first-order reward improvement while remaining strictly feasible.
\end{remark}

\begin{proposition}
\label{prop:gap}
{\em (Local reward improvement from unused budget)} Suppose $\Jr,\Jc$ have $L$-Lipschitz gradients near $\bm{\theta}$ and $\gc(\bm{\theta})\neq\bm{0}$
(cf.\ Assumptions~\ref{asm:smooth} and~\ref{asm:regular}). Fix
$\beta\in(0,1)$ and consider the fractional normal step
\begin{equation}
  \Delta_\beta(\bm{\theta})
  \;=\;
  -\,\beta\,\frac{\delta(\bm{\theta})}{\norm{\gc(\bm{\theta})}^{2}}\,\gc(\bm{\theta}).
  \label{eq:frac-step}
\end{equation}
Then
\begin{align}
  \delta(\bm{\theta}+\Delta_\beta)
  &= (1-\beta)\,\delta(\bm{\theta}) + O\!\bigl(\delta(\bm{\theta})^{2}\bigr),
  \label{eq:delta-contract}\\
  \Jr(\pi_{\bm{\theta}+\Delta_\beta})-\Jr(\pith)
  &= -\,\beta\,\delta(\bm{\theta})\,
  \frac{\inner{\gr}{\gc}}{\norm{\gc}^{2}}
  + O\!\bigl(\delta(\bm{\theta})^{2}\bigr), \nonumber
  \label{eq:gap-approx}
\end{align}
where both remainders are bounded by
$\frac{L\beta^{2}}{2\norm{\gc}^{2}}\,\delta(\bm{\theta})^{2}$.
Consequently, if $\delta(\bm{\theta})<0$, $\inner{\gr}{\gc}>0$, and
\begin{equation}
  |\delta(\bm{\theta})|
  \;<\;
  \frac{2}{L\beta}\,
  \min\!\left\{\frac{(1-\beta)\norm{\gc}^{2}}{\beta},\;
  \inner{\gr}{\gc}\right\},
  \label{eq:small-delta}
\end{equation}
then $\bm{\theta}+\Delta_\beta$ is still strictly feasible and attains a
strictly larger reward, with first-order gain
$\beta\,|\delta(\bm{\theta})|\,\inner{\gr}{\gc}/\norm{\gc}^{2}$.
\end{proposition}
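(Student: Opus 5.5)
The argument is a second-order Taylor (descent-lemma) expansion of $\delta$ and $\Jr$ along the fixed displacement $\Delta_\beta$, followed by a sign analysis of the resulting inequalities. Since $\delta(\bm\theta)=\Jc(\pith)-d$ differs from $\Jc$ by a constant, $\nabla\delta=\gc$, so $\delta$ inherits the $L$-Lipschitz gradient. For any function $f$ with $L$-Lipschitz gradient on a neighborhood containing the segment $[\bm\theta,\bm\theta+\Delta]$, the standard integral form of the mean-value theorem gives
\begin{equation*}
  \bigl|f(\bm\theta+\Delta)-f(\bm\theta)-\inner{\nabla f(\bm\theta)}{\Delta}\bigr|\le \tfrac{L}{2}\norm{\Delta}^2 .
\end{equation*}
I will first note that $\norm{\Delta_\beta}=\beta|\delta(\bm\theta)|/\norm{\gc}$, so the segment lies in the neighborhood when $|\delta|$ is small. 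Condition \eqref{eq:small-delta} can be read as enforcing this, or one can assume the neighborhood is large enough. Either way, $\frac L2\norm{\Delta_\beta}^2=\frac{L\beta^2}{2\norm{\gc}^2}\delta^2$, which is exactly the claimed remainder bound.

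Next I compute the two first-order terms. For the residual, $\inner{\gc}{\Delta_\beta}=-\beta\delta$, so
\begin{equation*}
  \delta(\bm\theta+\Delta_\beta)=(1-\beta)\delta+R_c,\qquad |R_c|\le\tfrac{L\beta^2}{2\norm{\gc}^2}\delta^2,
\end{equation*}
which is \eqref{eq:delta-contract}. For the reward, $\inner{\gr}{\Delta_\beta}=-\beta\delta\,\inner{\gr}{\gc}/\norm{\gc}^2$, and the remainder $R_r$ obeys the same bound. This gives the second displayed identity.

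For the consequence, assume $\delta<0$ and $\inner{\gr}{\gc}>0$, and write $|\delta|=-\delta$.

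\emph{Strict feasibility.} It suffices that $(1-\beta)|\delta|>\frac{L\beta^2}{2\norm{\gc}^2}\delta^2$. Dividing by $|\delta|>0$, this is equivalent to $|\delta|<\frac{2(1-\beta)\norm{\gc}^2}{L\beta^2}$, which is the first branch of the min in \eqref{eq:small-delta}.

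\emph{Strict reward gain.} It suffices that $\beta|\delta|\inner{\gr}{\gc}/\norm{\gc}^2>\frac{L\beta^2}{2\norm{\gc}^2}\delta^2$. This is equivalent to $|\delta|<\frac{2\inner{\gr}{\gc}}{L\beta}$, which is the second branch. The first-order gain is then read off directly from the linear term.

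The main obstacle is the domain issue: the Lipschitz hypothesis is only assumed "near $\bm\theta$". I would handle it by stating explicitly that the step length $\beta|\delta|/\norm{\gc}$ is small under \eqref{eq:small-delta}, or by shrinking the admissible $|\delta|$ so the segment stays in the Lipschitz neighborhood. Everything else is routine algebra.
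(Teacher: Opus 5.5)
Your argument is correct and is the paper's proof essentially verbatim: the descent-lemma remainder $\tfrac{L}{2}\norm{\Delta_\beta}^2=\tfrac{L\beta^2}{2\norm{\gc}^2}\delta^2$, the computations $\inner{\gc}{\Delta_\beta}=-\beta\delta$ and $\inner{\gr}{\Delta_\beta}=-\beta\delta\inner{\gr}{\gc}/\norm{\gc}^2$, and the branch-by-branch dominance checks against \eqref{eq:small-delta}. One minor correction: \eqref{eq:small-delta} involves only $L$, $\beta$, $\norm{\gc}$ and $\inner{\gr}{\gc}$, not the radius of the region where the Lipschitz bound holds, so it cannot by itself keep the segment $[\bm\theta,\bm\theta+\Delta_\beta]$ inside that region; the right fix, which the paper assumes implicitly, is to read ``near $\bm\theta$'' as covering that segment.
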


\noindent\emph{Proof.} By $L$-smoothness,
$\delta(\bm{\theta}+\Delta_\beta)=\delta(\bm{\theta})+\inner{\gc}{\Delta_\beta}+R_c$
with $|R_c|\leq\tfrac{L}{2}\norm{\Delta_\beta}^{2}$. Direct computation
gives $\inner{\gc}{\Delta_\beta}=-\beta\,\delta(\bm{\theta})$ and
$\norm{\Delta_\beta}=\beta|\delta(\bm{\theta})|/\norm{\gc}$,
hence $|R_c|\leq\frac{L\beta^{2}}{2\norm{\gc}^{2}}\delta(\bm{\theta})^{2}$. Identically,
$\Jr(\pi_{\bm{\theta}+\Delta_\beta})-\Jr(\pith)
=\inner{\gr}{\Delta_\beta}+R_r
=-\beta\,\delta(\bm{\theta})\inner{\gr}{\gc}/\norm{\gc}^{2}+R_r$
with $|R_r|\leq\frac{L\beta^{2}}{2\norm{\gc}^{2}}\delta(\bm{\theta})^{2}$. When $\delta(\bm{\theta})<0$, we have $(1-\beta)\delta(\bm{\theta})<0$ and
$\beta|\delta(\bm{\theta})|\inner{\gr}{\gc}/\norm{\gc}^{2}>0$, and both
strictly dominate the remainders exactly when \eqref{eq:small-delta} holds,
yielding strict feasibility and a strict reward increase. \hfill$\blacksquare$

Proposition~\ref{prop:gap} makes the intended claim precise: not every
interior policy incurs a reward loss; rather, under positive gradient
alignment, a fraction $\beta$ of the unused cost budget can be converted
into a first-order reward gain of
$\beta|\delta|\inner{\gr}{\gc}/\norm{\gc}^{2}$ while preserving strict
feasibility. This result motivates the constraint residual as a diagnostic
of boundary tracking. The residual alone is not a universal surrogate for
reward, because the local reward change also depends on the alignment and
magnitudes of the reward and cost gradients.

\section{Boundary-Seeking Policy Gradient}
\label{sec:algo}
Theorem~\ref{thm:optimality} and Corollary~\ref{cor:realizability} place the global constrained optimum on the active constraint set $\cM$ under their stated conditions; $\cM$ may also contain non-optimal points. A natural algorithmic strategy is therefore to improve reward along $\cM$ while regulating the iterates toward $\cM$. Motivated by this, we propose Boundary-Seeking Policy Gradient (BSPG) Algorithm (\cref{alg:bspg}) in this section, which implements this strategy through a gradient decomposition that separates the two goals into orthogonal components.

\subsection{Gradient Decomposition}
BSPG seeks to improve reward along the constraint manifold while steering the iterates toward it when necessary. These two roles naturally correspond to the tangential and normal directions of $\cM$, motivating the following decomposition. At any $\bm{\theta}$ with $\gc \neq \bm{0}$, the tangent space of the cost level set $\{\bm{\theta}':\Jc(\pi_{\bm{\theta}'})=\Jc(\pith)\}$ (which coincides with the tangent space of $\cM$ when $\bm{\theta}\in\cM$) is
$
  T_{\bm{\theta}}\cM = \{\bm{v} \in \RR^p : \inner{\bm{v}}{\gc} = 0\}.
$
The cost gradient $\bm g_c$ is normal to the level set $\{J_c=\text{const}\}$, which induces a natural splitting of any vector $\bm g\in \mathbb R^p$ into  the tangential part and the normal part
$\bm{g} = \bm{g}^{\perp} + \bm{g}^{\parallel}$ where:
\begin{align*}
  \bm{g}^{\perp} &= \bm{g} - \frac{\inner{\bm{g}}{\gc}}{\norm{\gc}^2}\gc
  \;\in\; T_{\bm{\theta}}\cM, 
  \\[-1mm]
  \bm{g}^{\parallel} &= \frac{\inner{\bm{g}}{\gc}}{\norm{\gc}^2}\gc
  \;\in\; \mathrm{span}(\gc). 
\end{align*}
By construction, a first-order update along $\bm g^\perp$ leaves the cost objective unchanged, since $\inner{\nablath \Jc}{\bm g^\perp} = \inner{\gc}{\bm g^\perp} = 0$. We therefore define the \emph{tangential reward gradient} as \vspace{-6pt}
\begin{align}
  \gperp = \gr - \frac{\inner{\gr}{\gc}}{\norm{\gc}^2}\gc.
  \label{eq:gperp}
\end{align}
\vspace{-12pt}

Hence, whenever $\gperp\neq\bm{0}$, it yields a first-order improvement in the reward objective while leaving the cost objective unchanged, irrespective of the angle $\theta_{rc}$.

\subsection{The BSPG Update}
The tangential reward gradient $\gperp$ captures reward-improving motion along the constraint manifold, but by construction it does not regulate the iterate's distance to $\cM$. To control this distance, we add a normal component that attracts the iterate toward the manifold from either side. Specifically, we define the boundary-attraction term as
\begin{equation}
  \gb = -\dhat \cdot \frac{\gc}{\norm{\gc} + \varepsilon},
  \label{eq:gb}
\end{equation}
\vspace{-12pt}

where $\dhat$ is the estimated constraint residual defined in Section~\ref{sec:estimation} and $\varepsilon > 0$ is a numerical stabilizer. Note that $\delta=\Jc-d$ is a residual in cost value, not a Euclidean distance in parameter space. The resulting BSPG update is
\begin{equation}
  \gup = \gperp + \eta \cdot \gb,
  \label{eq:bspg}
\end{equation}
\vspace{-16pt}

where $\eta > 0$ is the boundary-attraction coefficient, and the parameter update is
$\bm{\theta}_{t+1} = \bm{\theta}_t + \alpha_t \bm{g}_{\mathrm{up},t}.$ The update in \eqref{eq:bspg} combines a tangential component and a normal component. The term $\gperp$ performs reward ascent while preserving the cost objective to first order. The term $\gb$ provides signed attraction toward the boundary: when $\dhat > 0$, the iterate is infeasible and $\gb$ acts to decrease $\Jc$; when $\dhat < 0$, the iterate lies in the feasible interior and $\gb$ acts to increase $\Jc$. In both cases, the strength of this attraction scales with $|\dhat|$.

 Fig.~\ref{fig:grad_update} illustrates the resulting geometry against ESPO \cite{GuShiDin_24}: on the feasible side (Fig.~\ref{fig:grad_update}b), $\gb$ points outward toward $\cM$, reducing conservatism through a mechanism absent in ESPO (Fig.~\ref{fig:grad_update}a); on the infeasible side (Fig.~\ref{fig:grad_update}d), $\gb$ restores feasibility while $\gperp$ continues reward improvement along the boundary, whereas ESPO (Fig.~\ref{fig:grad_update}c) folds the cost correction into a single combined direction.
\begin{figure}[t]
    \centering
    \includegraphics[width=0.6\columnwidth]{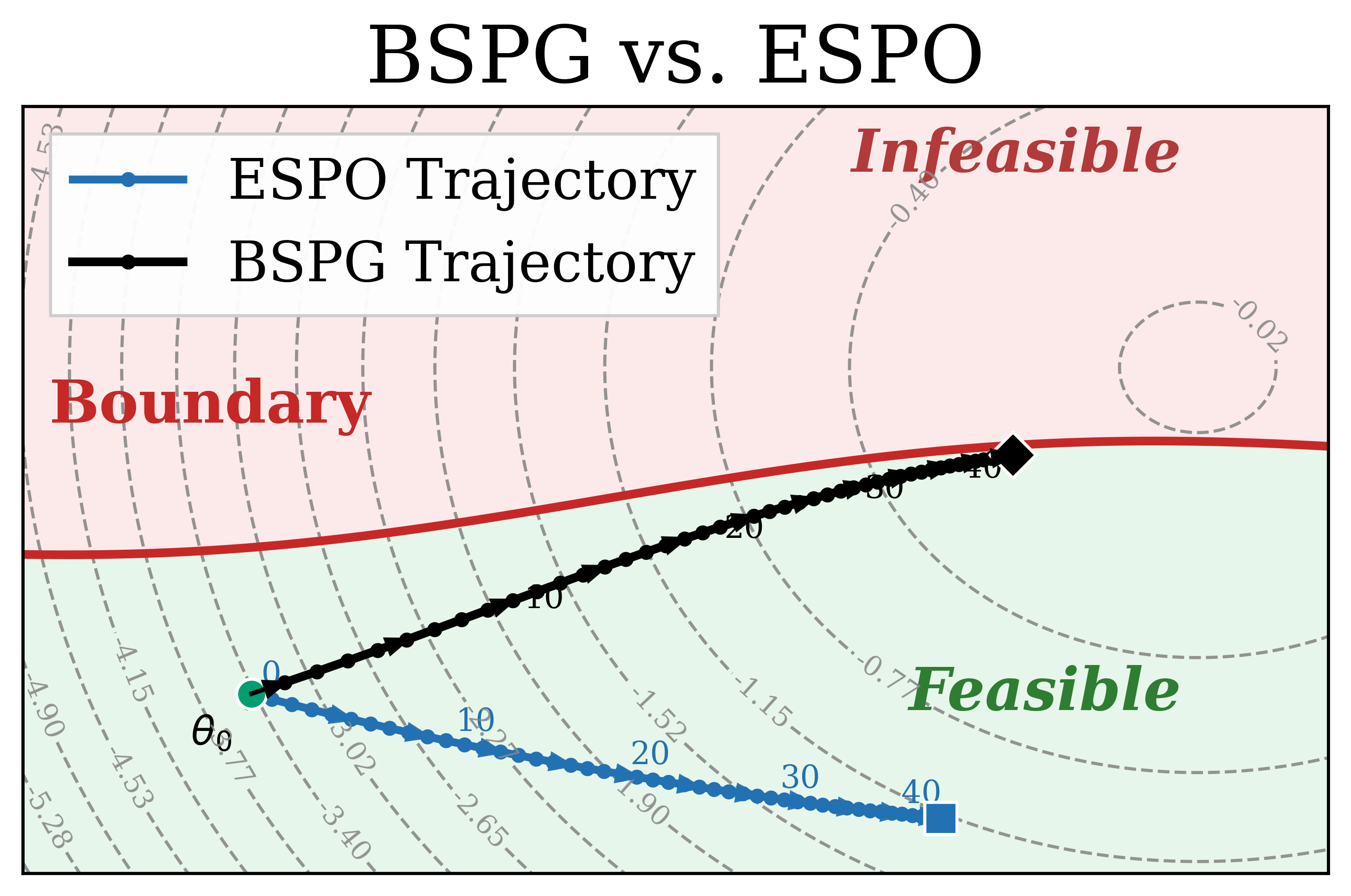}
    \caption{Update trajectories of BSPG and ESPO on a synthetic constrained problem; the example only visualizes the two update components.}
    \label{fig:update_trajectory}
    \vspace{-3ex}
\end{figure}
Fig.~\ref{fig:update_trajectory} traces both methods on a synthetic problem whose unconstrained reward maximizer lies beyond the boundary: from the same feasible start, ESPO remains interior while BSPG reaches the active boundary and then improves reward along it. 

\subsection{Implicit Lagrangian Interpretation} 
Although BSPG is derived from a geometric decomposition into tangential and normal directions, its update also admits an equivalent Lagrangian interpretation. In particular, the combined direction in \eqref{eq:bspg} can be rewritten as a gradient step on a Lagrangian objective with an automatically induced multiplier. Define the \emph{implicit multiplier} as:
\vspace{-4pt}
\begin{equation}
  \lamt = \frac{\inner{\gr}{\gc}}{\norm{\gc}^2}
  + \frac{\eta\dhat}{\norm{\gc} + \varepsilon}.
  \label{eq:lambda-t}
\end{equation}
\vspace{-8pt}

Substituting \eqref{eq:gperp} into \eqref{eq:bspg} and collecting the terms along $\gc$ yields
\vspace{-12pt}
\begin{align}
  \gup &= \gr - \frac{\inner{\gr}{\gc}}{\norm{\gc}^2}\gc
  - \eta\dhat \frac{\gc}{\norm{\gc} + \varepsilon} \nonumber\\[-1mm]
  &= \gr - \underbrace{\left(\frac{\inner{\gr}{\gc}}{\norm{\gc}^2}
  + \frac{\eta\dhat}{\norm{\gc} + \varepsilon}\right)}_{\displaystyle\lamt}\gc
  \label{eq:implicit}\\[-1mm]
  &= \nablath\!\left[\Jr(\pith) - \lambda \Jc(\pith)\right]\!\bigg|_{\lambda=\lamt},
  \label{eq:lagrangian-grad}
\end{align}
\vspace{-1em}

where the gradient in \eqref{eq:lagrangian-grad} is evaluated with $\lambda$ held fixed at $\lambda_t$.  Eq.~\eqref{eq:lagrangian-grad} shows that each BSPG step is equivalent to a gradient-ascent step on the Lagrangian $L(\theta,\lambda)=J_r(\pi_\theta)-\lambda J_c(\pi_\theta),$ with the multiplier $\lambda_t$ induced automatically by the local gradient geometry and the boundary signal.

The implicit multiplier in \eqref{eq:lambda-t} consists of two terms with distinct roles. The first term, $\inner{\gr}{\gc}/\norm{\gc}^2$, is the projection coefficient of the reward gradient onto the cost gradient. The second term, $\eta\hat\delta/\!(\norm{\gc}+\varepsilon)$, introduces a boundary-dependent correction that modulates the effective penalty according to the signed residual: in the feasible interior ($\hat\delta<0$) it lowers $\lamt$, tilting the update toward higher cost, and under positive gradient alignment, higher reward, until the residual closes. We emphasize that $\lamt$ is an algebraic coefficient rather than a learned dual variable, and it need not be nonnegative away from a KKT point. If the iterates converge to a stationary point on $\cM$, then $\hat\delta\to 0$ and $\lamt$ approaches the equality-stationarity coefficient $\lamstar$ of Theorem~\ref{thm:kkt}, which is a valid KKT multiplier exactly when the limit is also locally optimal over $\cF$.

\subsection{Boundary Distance Estimation}
\label{sec:estimation}
The BSPG update in \eqref{eq:bspg}, equivalently the implicit multiplier in \eqref{eq:lambda-t}, depends on the constraint residual $\delta(\bm{\theta}_t)=\Jc(\pi_{\bm{\theta}_t})-d$, which is not directly observable during training. We therefore estimate it by combining the learned cost-value network $V_c$ with a residual batch correction:
\begin{equation}
  \dhat_t = \underbrace{\frac{1}{N}\sum_{i=1}^N V_c(s_0^i)}_{\text{critic at }s_0}
  + \underbrace{\left(\bar{c}_{\mathrm{batch}} - \frac{1}{|\tau|}\sum_{(s,a) \in \tau} V_c(s)\right)}_{\text{residual bias correction}}
  - d.
  \label{eq:delta-est}
\end{equation}

where $N$ is the number of trajectories in the batch, $s_0^i$ are the initial states,
$\bar{c}_{\mathrm{batch}}$ is the mean discounted cost return from the batch, and
$|\tau|$ is the number of state--action pairs. The first term estimates the cost objective from the initial-state distribution through the critic; the second is a batch calibration term that vanishes when $V_c$ matches the empirical cost returns. Because the two averages are taken under different state distributions, no general unbiasedness or variance-reduction claim follows without additional assumptions on the critic and the sampling process; we use \eqref{eq:delta-est} as an implementation heuristic. The theory in Section~\ref{sec:theory} analyzes the exact residual $\dhat_t=\delta_t$ and does not depend on this estimator.

\subsection{Practical BSPG Implementation}
Algorithm~\ref{alg:bspg} presents the practical implementation of BSPG built upon PPO\cite{SchWolDha_17}. Lines 5--7 estimate the boundary distance(Eq.\eqref{eq:delta-est}), while lines 11--16 compute the shaped update direction(Eq.\eqref{eq:gperp},\eqref{eq:bspg}); the remaining steps follow the standard PPO procedure. Relative to PPO, BSPG introduces only two additional hyperparameters: the boundary-attraction coefficient $\eta$ and the clipping threshold $\delta_{\max}$ for $\hat\delta$.

\begin{algorithm}[t]
\caption{BSPG}
\label{alg:bspg}
\begin{algorithmic}[1]
\State \textbf{Input:} Policy $\pith$, value networks $V$, $V_c$, threshold $d$, coefficient $\eta$, stabilizer $\varepsilon$, clip range $\varepsilon_{\mathrm{clip}}$, residual clip $\delta_{\max}$, step size $\alpha$
\While{not converged}
  \State Collect rollout $\tau \sim \pith$ for $T$ steps
  \State Compute GAE reward advantages $\hat{A}$ using $V$; cost advantages $\hat{A}_c$ using $V_c$
  \State \textcolor{mygreen}{\texttt{//Boundary estimation (Eq.~\ref{eq:delta-est})}}
  \State $\dhat \! \leftarrow \! \frac{1}{N}\sum_i V_c(s_0^i) \!+\! \bigl(\bar{c}_{\mathrm{batch}} \!- \!\frac{1}{|\tau|}\sum_{s\in\tau}V_c(s)\bigr) - d$
  \State $\dhat \leftarrow \clip\!\left(\dhat,\, -\delta_{\max},\, \delta_{\max}\right)$
  \State \textcolor{mygreen}{\texttt{//Surrogate gradients (PPO clipped objectives)}}
    \State $L_r(\bm{\theta}) \leftarrow \text{PPO-clip surrogate for reward using } \hat{A}$
  \State $L_c(\bm{\theta}) \leftarrow \text{PPO-clip surrogate for cost using } \hat{A}_c$
  \State $\gr \leftarrow \nablath L_r(\bm{\theta})$;\quad $\gc \leftarrow \nablath L_c(\bm{\theta})$
  \State Normalize: $\hat{\gr} \leftarrow \gr/(\norm{\gr}+\varepsilon)$;\quad $\hat{\gc} \leftarrow \gc/(\norm{\gc}+\varepsilon)$
  \State \textcolor{mygreen}{\texttt{//Gradient decomposition}}
  \State $\gperp \leftarrow \hat{\gr} - \dfrac{\inner{\hat{\gr}}{\hat{\gc}}}{\norm{\hat{\gc}}^2 + \varepsilon}\hat{\gc}$
  \State \textcolor{mygreen}{\texttt{// Boundary attraction}}
  \State $\gb \leftarrow -\dhat \cdot \hat{\gc} / (\norm{\hat{\gc}} + \varepsilon)$
  \State \textcolor{mygreen}{\texttt{// Shaped gradient update}}
  \State $\bm{\theta} \leftarrow \bm{\theta} + \alpha\,(\gperp + \eta\,\gb)$
    \State Update $V$ and $V_c$ via MSE on GAE return targets
\EndWhile
\end{algorithmic}
\end{algorithm}

\section{Theoretical Analysis}
\label{sec:theory}

We now establish the convergence properties of the ideal BSPG update with exact gradients and $\dhat_t=\delta_t$; the statements below do not claim convergence of the stochastic normalized PPO implementation. Three results are proved: the constraint residual converges to zero from either side (with a finite-horizon rate), the tangential component is a reward-ascent direction on the boundary, and any convergent run is characterized at its limit. The proofs of Theorems~\ref{thm:boundary}--\ref{thm:kkt} are provided in Appendices~\ref{sec:proof1}--\ref{sec:proof3}. We begin by collecting the regularity assumptions.

\begin{assumption}[Smoothness and bounded gradients]
\label{asm:smooth}
There is a compact set $\Theta$ containing all exact BSPG iterates and the line segments between consecutive iterates. On $\Theta$, $\Jr$ and $\Jc$ have $L$-Lipschitz gradients, and $\norm{\gr},\norm{\gc}\leq G$.
\end{assumption}

\begin{assumption}[Constraint regularity]
\label{asm:regular}
There are constants $\mu>0$ and $D<\infty$ such that
$\norm{\gc(\bm{\theta})}\geq\mu$ and $|\delta(\bm{\theta})|\leq D$
for all $\bm{\theta}\in\Theta$.
\end{assumption}

\begin{assumption}[Diminishing step sizes]
\label{asm:stepsize}
The step size sequence $\{\alpha_t\}_{t \geq 0}$ satisfies:
$
  \sum_{t=0}^{\infty} \alpha_t = \infty ,$ and $\sum_{t=0}^{\infty} \alpha_t^2 < \infty.
$
\end{assumption}

Assumptions~\ref{asm:smooth} and~\ref{asm:stepsize} are standard in first-order constrained policy optimization~\cite{XuLiaLan_21}; if $0\leq c(s,a)\leq c_{\max}$ and $0\leq d\leq c_{\max}/(1-\gamma)$, the residual bound holds automatically with $D=\max\{d,\,c_{\max}/(1-\gamma)-d\}$.
Assumption~\ref{asm:regular} is a nondegeneracy (constraint-qualification) condition excluding points where the cost gradient vanishes; it makes the local cost level sets regular and ensures the decomposition \eqref{eq:gperp} is well defined at every iterate.

We first show that BSPG drives the constraint residual $\Jc(\pi_{\bm{\theta}_t})-d$ to zero.

\begin{theorem}[Boundary Convergence]
\label{thm:boundary}
Under Assumptions~\ref{asm:smooth}--\ref{asm:stepsize}, let
$\delta_t\triangleq\delta(\bm{\theta}_t)=\Jc(\pi_{\bm{\theta}_t})-d$ denote the
signed constraint violation at iteration $t$.  The sequence
$\{\delta_t\}$ generated by exact-gradient BSPG satisfies:
\begin{equation}
  V(\bm{\theta}_{t+1}) \leq V(\bm{\theta}_t)
  - \mu'\alpha_t\,\delta_t^2 + C_R\alpha_t^2,
  \label{eq:lyapunov-main}
\end{equation}
where $V(\bm{\theta}) = \tfrac{1}{2}\delta(\bm{\theta})^2$,
$\mu' = \frac{\eta\mu^2}{G+\varepsilon}$,
and $C_R = (G^2+LD)(G^2+\eta^2 D^2)$.
Consequently:
\begin{equation}
  \sum_{t=0}^{\infty} \alpha_t\,\delta_t^2 < \infty
  \qquad\text{and}\qquad \delta_t \to 0.
  \label{eq:boundary-conv}
\end{equation}
Thus, $\Jc(\pi_{\bm{\theta}_t})\to d$ from either side of the boundary.
\end{theorem}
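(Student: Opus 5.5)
The approach is a standard Lyapunov descent argument on $V(\bm{\theta})=\tfrac12\delta(\bm{\theta})^2$, followed by a Robbins--Siegmund-type summation. Write $\bm g_t=\gup(\bm{\theta}_t)$ with $\dhat_t=\delta_t$.

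\textbf{Step 1: one-step change of $\delta$.} By $L$-smoothness of $\Jc$ on $\Theta$ (Assumption~\ref{asm:smooth}), which contains the segment $[\bm{\theta}_t,\bm{\theta}_{t+1}]$,
\[
\delta_{t+1}=\delta_t+\alpha_t\inner{\gc}{\bm g_t}+R_t,\qquad |R_t|\le \tfrac{L}{2}\alpha_t^2\norm{\bm g_t}^2 .
\]
The tangential part contributes nothing, since $\inner{\gc}{\gperp}=0$ by \eqref{eq:gperp}. The normal part gives
\[
\inner{\gc}{\eta\gb}=-\eta\delta_t\frac{\norm{\gc}^2}{\norm{\gc}+\varepsilon}.
\]
Hence the first-order term equals $-\alpha_t\kappa_t\delta_t$ with $\kappa_t=\eta\norm{\gc}^2/(\norm{\gc}+\varepsilon)$. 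Since $s\mapsto s^2/(s+\varepsilon)$ is increasing, Assumptions~\ref{asm:smooth}--\ref{asm:regular} give
\[
\kappa_t\ge \eta\mu^2/(G+\varepsilon)=\mu'.
\]
We also need a bound on the step length. Because $\gperp$ is an orthogonal projection of $\gr$, we have $\norm{\gperp}\le G$. Also $\norm{\eta\gb}\le\eta|\delta_t|\le\eta D$. Since $\gperp\perp\gb$,
\[
\norm{\bm g_t}^2\le G^2+\eta^2D^2.
\]

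\textbf{Step 2: square.} Expand
\[
\tfrac12\delta_{t+1}^2=\tfrac12\delta_t^2-\alpha_t\kappa_t\delta_t^2+\delta_tR_t+\tfrac12(\alpha_t\kappa_t\delta_t-R_t)^2 .
\]
Use $-\alpha_t\kappa_t\delta_t^2\le-\mu'\alpha_t\delta_t^2$ and $|\delta_tR_t|\le \tfrac{LD}{2}\alpha_t^2(G^2+\eta^2D^2)$. The last square is at most $\alpha_t^2\cdot(\text{const})$, using $\kappa_t\delta_t\le\norm{\gc}\,\eta|\delta_t|$ (so its square is at most $G^2\eta^2D^2$) together with $R_t^2=O(\alpha_t^4)$, which is $O(\alpha_t^2)$ once $\alpha_t$ is bounded. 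Alternatively, write $\delta_{t+1}-\delta_t=\alpha_t\inner{\nabla\Jc(\xi)}{\bm g_t}$ by the mean value theorem, which bounds the increment by $\alpha_tG\norm{\bm g_t}$. Collecting terms should produce a constant of the form $(G^2+LD)(G^2+\eta^2D^2)$, yielding \eqref{eq:lyapunov-main}. \textbf{The main obstacle is this constant bookkeeping}: the stated $C_R$ is tidy, and the quadratic increment term must be absorbed as $\tfrac12G^2\alpha_t^2\norm{\bm g_t}^2$ via the mean value bound rather than through $R_t^2$. If exact matching fails, any constant of the form $C\alpha_t^2$ suffices for the consequences.

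\textbf{Step 3: consequences.} Summing \eqref{eq:lyapunov-main} from $0$ to $T$ gives
\[
\mu'\sum_{t\le T}\alpha_t\delta_t^2\le V(\bm{\theta}_0)+C_R\sum_t\alpha_t^2,
\]
so $\sum_t\alpha_t\delta_t^2<\infty$ by Assumption~\ref{asm:stepsize}. Since $\sum_t\alpha_t=\infty$, this forces $\liminf_t\delta_t^2=0$. To upgrade to a full limit, note that dropping the negative term gives $V_{t+1}\le V_t+C_R\alpha_t^2$ with summable perturbations, so $V_t+\sum_{s\ge t}C_R\alpha_s^2$ is nonincreasing and bounded below. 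Hence $V_t$ converges, and its limit must equal the $\liminf$, which is $0$. Therefore $\delta_t\to0$.

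The sign of $\delta_t$ never entered the argument, which is why the convergence holds from either side of the boundary.
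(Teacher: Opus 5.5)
Your proof is correct. Its first half is the paper's argument written out by hand, and it differs from the paper in the step that upgrades summability to $\delta_t\to0$.

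\textbf{The Lyapunov inequality.} The paper shows $V$ is $L_V$-smooth with $L_V=G^2+LD$ and applies the descent lemma. You instead expand $\delta_{t+1}$ and square it, using $|\delta_t R_t|\le\tfrac{LD}{2}\alpha_t^2\norm{\bm g_t}^2$ and the mean-value bound $\tfrac12(\delta_{t+1}-\delta_t)^2\le\tfrac{G^2}{2}\alpha_t^2\norm{\bm g_t}^2$. This gives exactly the same second-order coefficient $\tfrac{L_V}{2}\alpha_t^2\norm{\bm g_t}^2$.

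So the constant bookkeeping you flagged as the obstacle closes, with room to spare. Because $\gperp\perp\gb$, you have $\norm{\bm g_t}^2\le G^2+\eta^2D^2$, which yields the constant $\tfrac12(G^2+LD)(G^2+\eta^2D^2)=\tfrac12 C_R$. The paper instead uses $\norm{a+b}^2\le 2\norm{a}^2+2\norm{b}^2$ and arrives at $C_R$ itself. Drop your $R_t^2=O(\alpha_t^4)$ alternative, since it would bring $\sup_t\alpha_t$ into the constant.

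\textbf{From summability to $\delta_t\to0$.} Here your route genuinely differs. The paper rewrites the inequality as $V_{t+1}\le(1-2\mu'\alpha_t)V_t+C_R\alpha_t^2$. It then needs $2\mu'\alpha_t\le1$ eventually and defers the convolution term to a ``standard argument.''

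Your argument is a deterministic Robbins--Siegmund step. You note that $V_t+C_R\sum_{s\ge t}\alpha_s^2$ is nonincreasing, so $V_t$ converges, and its limit must equal the $\liminf$, which is $0$. This is fully self-contained and requires no smallness of the step sizes.

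What each route buys: the paper's contraction form is the one you would refine to get a last-iterate rate on $V_t$. Yours gives only convergence, but that is all this theorem needs. The averaged rate in Corollary~\ref{cor:finite-residual} comes from the summed inequality in either case.
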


\begin{corollary}[Finite-horizon residual bound]
\label{cor:finite-residual}
Let $A_T=\sum_{t=0}^{T-1}\alpha_t$ and draw $\tau\in\{0,\ldots,T-1\}$ with $\Pr(\tau=t)=\alpha_t/A_T$. Then
\begin{equation}
  \EE[\delta_\tau^2]
  \;\leq\;
  \frac{V_0+C_R\sum_{t=0}^{T-1}\alpha_t^2}{\mu' A_T}.
  \label{eq:finite-residual}
\end{equation}
In particular, for constant $\alpha_t=\bar{\alpha}/\sqrt{T}$ over a fixed horizon $T$, the bound is $O(1/\sqrt{T})$.
\end{corollary}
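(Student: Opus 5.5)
The plan is to telescope the one-step Lyapunov inequality \eqref{eq:lyapunov-main} of Theorem~\ref{thm:boundary} over $t=0,\ldots,T-1$ and then read off the weighted average. Write $V_t=V(\bm{\theta}_t)=\tfrac12\delta_t^2$. Summing \eqref{eq:lyapunov-main} gives
\begin{equation*}
  V_T \;\le\; V_0 - \mu'\sum_{t=0}^{T-1}\alpha_t\delta_t^2 + C_R\sum_{t=0}^{T-1}\alpha_t^2 .
\end{equation*}
Since $V_T\ge 0$, this rearranges to
\begin{equation*}
  \mu'\sum_{t=0}^{T-1}\alpha_t\delta_t^2 \;\le\; V_0 + C_R\sum_{t=0}^{T-1}\alpha_t^2 .
\end{equation*}
The telescoping step uses only \eqref{eq:lyapunov-main} at each $t<T$. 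That inequality was established under Assumptions~\ref{asm:smooth}--\ref{asm:regular}, so the square-summability part of Assumption~\ref{asm:stepsize} plays no role in the finite-horizon statement.

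Next, I use the definition of $\tau$. Because $\Pr(\tau=t)=\alpha_t/A_T$, we have
\begin{equation*}
  \EE[\delta_\tau^2]=\frac{1}{A_T}\sum_{t=0}^{T-1}\alpha_t\delta_t^2 .
\end{equation*}
Dividing the previous display by $\mu' A_T>0$ (positive provided some $\alpha_t>0$) yields exactly \eqref{eq:finite-residual}.

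For the rate, substitute $\alpha_t=\bar\alpha/\sqrt{T}$. Then $A_T=\bar\alpha\sqrt{T}$ and $\sum_{t<T}\alpha_t^2=\bar\alpha^2$, so the bound becomes
\begin{equation*}
  \frac{V_0+C_R\bar\alpha^2}{\mu'\bar\alpha\sqrt{T}} = O(1/\sqrt{T}).
\end{equation*}
Since $V_0\le D^2/2$ by Assumption~\ref{asm:regular}, the constant is uniform in the initialization.

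The only point needing care is that the constant step size is not diminishing, so Assumption~\ref{asm:stepsize} fails. One therefore has to check that the proof of \eqref{eq:lyapunov-main} in Theorem~\ref{thm:boundary} is a per-step inequality that does not invoke Assumption~\ref{asm:stepsize}. It should not: it comes from $L$-smoothness of $\Jc$ together with the gradient and residual bounds. Assumption~\ref{asm:smooth} must still contain the iterates produced with this step size, which I state as the operative hypothesis. Apart from this check, the argument is purely algebraic.
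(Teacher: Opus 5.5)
Your proposal is correct and follows the paper's proof essentially verbatim: telescope \eqref{eq:lyapunov-main} over $t<T$, drop $V_T\ge 0$, identify $\EE[\delta_\tau^2]$ as the $\alpha$-weighted average, divide by $\mu' A_T$, and substitute $\alpha_t=\bar\alpha/\sqrt{T}$. Your check that the per-step inequality never uses Assumption~\ref{asm:stepsize} is accurate and makes explicit a point the paper leaves implicit; the $O(1/\sqrt{T})$ claim also tacitly needs the constants $G,L,\mu,D$ in Assumptions~\ref{asm:smooth}--\ref{asm:regular} to be uniform in $T$, which neither you nor the paper state.
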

The corollary follows by summing \eqref{eq:lyapunov-main} over $t<T$ and dividing by $\mu' A_T$; see Appendix~\ref{sec:proof1}.

The Lyapunov function $V=\tfrac{1}{2}\delta^2$ is symmetric in the sign of the residual, so the ideal normal component regulates the target cost level from either side. This differs from a standard projected dual update: a negative residual can decrease the dual variable only until it reaches zero, after which the boundary signal is lost, whereas BSPG continues to apply a residual-proportional normal correction toward the prescribed cost level.

It is worth noting that the Lyapunov function $V = \frac{1}{2}\delta^2$ is symmetric in the sign of $\delta$, so the boundary-seeking mechanism operates identically whether the current iterate is feasible or infeasible.
This contrasts with Lagrangian methods, where the multiplier update acts only on positive violations.

The boundary-attraction term drives $\delta\to 0$; however, it does not by itself guarantee reward improvement.
The following theorem shows that the tangential component $\gperp$ is a valid ascent direction for $J_r$.

\begin{theorem}[Reward Ascent]
\label{thm:ascent}
For any $\bm{\theta}$ with $\gc \neq \bm{0}$:
\begin{equation}
  \inner{\gr}{\gperp} = \norm{\gperp}^2 \geq 0.
\end{equation}
Equality holds iff $\gr\parallel\gc$.  On the manifold $\cM$ (where
$\delta=0$ and $\gup=\gperp$), the BSPG update is a strict ascent direction for $\Jr$
unless $\gr\parallel\gc$, i.e., unless $\bm{\theta}$ is a first-order stationary point of the equality-constrained problem $\max\Jr$ subject to $\Jc=d$.
\end{theorem}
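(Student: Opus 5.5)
The argument is a direct computation with the orthogonal decomposition $\gr=\gperp+\gr^{\parallel}$ from \eqref{eq:gperp}, where $\gr^{\parallel}=\frac{\inner{\gr}{\gc}}{\norm{\gc}^2}\gc$. We then read off the equality case and specialize to $\cM$.

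\textbf{Identity.} Since $\gc\neq\bm 0$, the tangential component satisfies $\inner{\gperp}{\gc}=\inner{\gr}{\gc}-\frac{\inner{\gr}{\gc}}{\norm{\gc}^2}\norm{\gc}^2=0$. Hence $\inner{\gperp}{\gr^{\parallel}}=0$, and
\[
\inner{\gr}{\gperp}=\inner{\gperp+\gr^{\parallel}}{\gperp}=\norm{\gperp}^2\ge 0 .
\]

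\textbf{Equality case.} Equality holds iff $\gperp=\bm 0$, i.e.\ iff $\gr=\frac{\inner{\gr}{\gc}}{\norm{\gc}^2}\gc\in\mathrm{span}(\gc)$. This is exactly the condition $\gr\parallel\gc$, where $\gr=\bm 0$ counts as parallel. Conversely, if $\gr=c\,\gc$, the projection returns $\gr$ itself, so $\gperp=\bm 0$.

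\textbf{Specialization to $\cM$.} On $\cM$ we have $\delta=0$, so with the exact residual $\dhat=\delta$ the term $\gb$ in \eqref{eq:gb} vanishes and $\gup=\gperp$ by \eqref{eq:bspg}. Consequently $\inner{\gr}{\gup}=\norm{\gperp}^2>0$ whenever $\gr\not\parallel\gc$. By the first-order expansion in the Preliminaries,
\[
\Jr(\bm\theta+\alpha\gup)-\Jr(\bm\theta)=\alpha\norm{\gperp}^2+o(\alpha),
\]
so $\gup$ is a strict ascent direction. Moreover $\inner{\gc}{\gup}=0$, so the cost is preserved to first order.

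\textbf{Link to stationarity.} It remains to identify $\gr\parallel\gc$ with first-order stationarity of $\max\Jr$ subject to $\Jc=d$. Under the regularity $\gc\neq\bm 0$ (LICQ for a single constraint), the Lagrange condition at a point of $\cM$ reads $\gr=\lambda\gc$ for some $\lambda\in\RR$. This is precisely $\gr\parallel\gc$, with $\lambda=\inner{\gr}{\gc}/\norm{\gc}^2$ matching $\lamt$ at $\dhat=0$. Equivalently, the projection of $\gr$ onto $T_{\bm\theta}\cM$, which is $\gperp$, vanishes.

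There is no real obstacle here. The only point needing care is stating the equality case correctly: it must include $\gr=\bm 0$, and it relies on $\gc\neq\bm 0$, which is needed both for the decomposition to be well defined and for the constraint qualification.
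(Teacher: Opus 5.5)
Your proposal is correct and is essentially the paper's direct computation. The paper expands both $\inner{\gr}{\gperp}$ and $\norm{\gperp}^2$ to the common value $\norm{\gr}^2-\inner{\gr}{\gc}^2/\norm{\gc}^2$ and invokes Cauchy--Schwarz, while you get the identity from $\inner{\gperp}{\gc}=0$ (so nonnegativity is automatic) and you also spell out the specialization to $\cM$ and the link to equality-constrained stationarity, which the paper's proof leaves implicit.
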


We next characterize convergent runs of the exact update.
\begin{theorem}[Convergent-limit characterization]
\label{thm:kkt}
Under Assumptions~\ref{asm:smooth}--\ref{asm:stepsize}, suppose the
exact-gradient BSPG iterates converge: $\bm{\theta}_t\to\bm{\theta}^*$.
Then $\bm{\theta}^*\in\cM$ and
\begin{equation}
  \gr(\bm{\theta}^*)=\lamstar\gc(\bm{\theta}^*),
  \qquad
  \lamstar=\frac{\inner{\gr(\bm{\theta}^*)}{\gc(\bm{\theta}^*)}}{\norm{\gc(\bm{\theta}^*)}^2},
  \label{eq:limit-characterization}
\end{equation}
i.e., $\bm{\theta}^*$ is a first-order stationary point for maximizing $\Jr$ on the active constraint set, and complementary slackness $\lamstar(\Jc(\pithstar)-d)=0$ holds. If, in addition, $\bm{\theta}^*$ is a local maximizer of $\Jr$ on $\cF$, then $\lamstar\geq 0$ and $\bm{\theta}^*$ satisfies the KKT conditions of \eqref{eq:cmdp}.
\end{theorem}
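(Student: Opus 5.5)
The plan has three steps: show the limit lies on $\cM$, show the limit is a fixed point of the continuous update map, and then add the sign condition on the multiplier from local optimality.

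\emph{Step 1: the limit lies on $\cM$.} By Theorem~\ref{thm:boundary}, $\delta_t\to 0$. Since $\delta$ is continuous on the compact set $\Theta$ (Assumption~\ref{asm:smooth}) and $\bm{\theta}_t\to\bm{\theta}^*\in\Theta$, we get $\delta(\bm{\theta}^*)=\lim_t\delta_t=0$, so $\bm{\theta}^*\in\cM$. Complementary slackness $\lamstar(\Jc(\pithstar)-d)=0$ then holds trivially, whatever the value of $\lamstar$.

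\emph{Step 2: the limit is stationary on $\cM$.} Consider the exact update direction
\[
\gup(\bm{\theta})=\gperp(\bm{\theta})-\eta\,\delta(\bm{\theta})\,\gc(\bm{\theta})/(\norm{\gc(\bm{\theta})}+\varepsilon).
\]
It is continuous on $\Theta$, because $\gr$ and $\gc$ are Lipschitz and $\norm{\gc}\ge\mu>0$ by Assumption~\ref{asm:regular}. Hence $\gup(\bm{\theta}_t)\to\gup(\bm{\theta}^*)=\gperp(\bm{\theta}^*)$, using $\delta(\bm{\theta}^*)=0$. Suppose, for contradiction, that $\bm v:=\gperp(\bm{\theta}^*)\neq\bm 0$. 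Then for all $t\ge t_0$ we have $\inner{\gup(\bm{\theta}_t)}{\bm v}\ge\tfrac12\norm{\bm v}^2$. Telescoping the update gives
\[
\inner{\bm{\theta}_T-\bm{\theta}_{t_0}}{\bm v}=\sum_{t=t_0}^{T-1}\alpha_t\inner{\gup(\bm{\theta}_t)}{\bm v}\ge\tfrac12\norm{\bm v}^2\sum_{t=t_0}^{T-1}\alpha_t .
\]
The right-hand side diverges by Assumption~\ref{asm:stepsize}, while the left-hand side stays bounded because the iterates converge. This contradiction gives $\gperp(\bm{\theta}^*)=\bm 0$, which is exactly $\gr(\bm{\theta}^*)=\lamstar\gc(\bm{\theta}^*)$ with the projection coefficient $\lamstar$ in \eqref{eq:limit-characterization}. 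This is first-order stationarity for $\max\Jr$ subject to $\Jc=d$, consistent with Theorem~\ref{thm:ascent}.

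\emph{Step 3: sign of $\lamstar$ under local optimality.} Now assume $\bm{\theta}^*$ is a local maximizer of $\Jr$ on $\cF$, and suppose $\lamstar<0$. Take the direction $\bm d=-\gc(\bm{\theta}^*)$. A first-order expansion, with remainder controlled by $L$-smoothness, gives
\[
\delta(\bm{\theta}^*+s\bm d)=-s\norm{\gc}^2+O(s^2)<0
\]
for small $s>0$, so these points are feasible. The reward changes by
\[
\Jr(\bm{\theta}^*+s\bm d)-\Jr(\bm{\theta}^*)=-s\lamstar\norm{\gc}^2+O(s^2)>0,
\]
which contradicts local maximality. Therefore $\lamstar\ge0$. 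Together with stationarity, primal feasibility and complementary slackness, this yields the KKT conditions of \eqref{eq:cmdp}.

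\emph{Expected obstacle.} The delicate point is Step 2. There is no descent-type objective for the full update, since $\lamt$ varies and $\delta$ can change sign. So stationarity has to come from the combination $\sum\alpha_t=\infty$ plus convergence of the iterates, rather than from a Lyapunov argument. This in turn requires that $\gup$ be continuous at $\bm{\theta}^*$, which is exactly where the lower bound $\norm{\gc}\ge\mu$ of Assumption~\ref{asm:regular} is used.
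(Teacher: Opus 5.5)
Your proposal is correct and follows the paper's proof essentially step for step. You use $\delta_t\to0$ plus continuity to get $\bm\theta^*\in\cM$, the telescoping argument against $\sum\alpha_t=\infty$ to force $\gperp(\bm\theta^*)=\bm 0$, and the direction $-\gc$ to rule out $\lamstar<0$. The only cosmetic difference is that you invoke continuity of the full map $\gup$ at $\bm\theta^*$, whereas the paper bounds the normal part directly by $\norm{\eta\gb(\bm\theta_t)}\le\eta|\delta_t|\to0$.
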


\begin{remark}[Limit-set extension]
\label{rem:limit-set}
The sequential hypothesis $\bm{\theta}_t\to\bm{\theta}^*$ admits a fully elementary, self-contained proof (Appendix~\ref{sec:proof3}). Under the same assumptions, a limit-set version --- every limit point of $\{\bm{\theta}_t\}$ satisfies \eqref{eq:limit-characterization} --- can be obtained through the ODE method and the chain-transitive form of LaSalle's invariance principle \cite{Bor_09}; we state the sequential version to keep the argument self-contained.
\end{remark}

\begin{remark}[Comparison with PCRPO]
\label{cor:pcrpo}
PCRPO \cite{GuSelDin_24} establishes convergence to a KKT point $\bm{\theta}^\dagger$ satisfying $\Jc(\pi_{\bm{\theta}^\dagger})\le d$, without further characterizing the constraint value at the limit. Under exact gradients, Theorem~\ref{thm:boundary} drives the residual to zero, and Theorem~\ref{thm:kkt} guarantees that the constraint is active, $\Jc(\pithstar)=d$, at the limit of any convergent run. Moreover, whenever a method stops at an interior point $\bm{\theta}^\dagger$ with residual $\delta^\dagger=\Jc(\pi_{\bm{\theta}^\dagger})-d<0$ and $\inner{\gr}{\gc}>0$, Proposition~\ref{prop:gap} shows that a fraction $\beta$ of the unused budget is locally convertible into a first-order reward gain of $\beta|\delta^\dagger|\inner{\gr}{\gc}/\norm{\gc}^2$ while preserving strict feasibility.
\end{remark}

\begin{remark}[Scope of the theory]
\label{rem:scope}
Theorem~\ref{thm:optimality} and Corollary~\ref{cor:realizability} identify the global constrained target; Theorems~\ref{thm:boundary}--\ref{thm:kkt} show that the ideal parameter update regulates this target cost level and characterize convergent runs. They do not establish global reward convergence for an arbitrary nonconvex policy class (which would require additional landscape structure or a separate tabular analysis), nor sample complexity or convergence guarantees for the stochastic normalized PPO implementation.
\end{remark}
\section{Experiments} \label{sec:experiments}
\noindent \textbf{Evaluation Setups.} We evaluate our method on \textbf{SafetyPointGoal1-v0} using the OmniSafe\cite{JiZhoZha_24} framework. This task provides a standard constrained navigation setting for studying the reward--cost tradeoff in safe policy optimization.

\noindent \textbf{Baselines.}
We compare BSPG against two closely related constrained RL baselines: \textbf{CRPO}\cite{XuLiaLan_21} and \textbf{ESPO}\cite{GuShiDin_24}. CRPO is a constraint-driven policy optimization method that switches update behavior according to constraint satisfaction, emphasizing feasibility control. ESPO is a recent first-order safe policy optimization method that directly balances reward improvement and cost reduction through gradient-based policy updates. 

\noindent \textbf{Evaluation Metrics.} We report three primary metrics throughout training: (i) \textit{average episodic reward}, where higher values indicate better task performance; (ii) \textit{average episodic cost}, where lower values indicate better constraint control; and (iii) \textit{boundary proximity}, the empirical residual magnitude $|\widehat{C} - d|$, where $\widehat{C}$ is the evaluation cost statistic; smaller values mean that the reported policy uses the prescribed budget more fully. The third metric is our primary diagnostic and, to our knowledge, has not been explicitly reported in prior Safe-RL work. By Theorem~\ref{thm:optimality}, smaller boundary proximity indicates tighter tracking of the active constraint boundary; by Proposition~\ref{prop:gap}, under positive reward--cost gradient alignment, the remaining residual corresponds to locally recoverable reward. The residual alone is not a universal surrogate for reward, since the local reward change also depends on the gradient alignment and magnitudes.

As shown in Fig.~\ref{fig:reward_cost}, BSPG attains the highest episodic reward on SafetyPointGoal1-v0. At the same time, BSPG maintains episodic cost near the constraint limit, while both baselines remain more conservative and settle well inside the feasible region. This behavior is most clearly illustrated in the boundary-proximity plot, where BSPG remains consistently closest to zero, indicating tighter tracking of the active constraint boundary. Taken together, these results support the proposed boundary-regulation mechanism in the evaluated task; they do not by themselves imply a general performance ordering across environments.
\begin{figure}[t]
    \centering
    \includegraphics[width=\columnwidth]{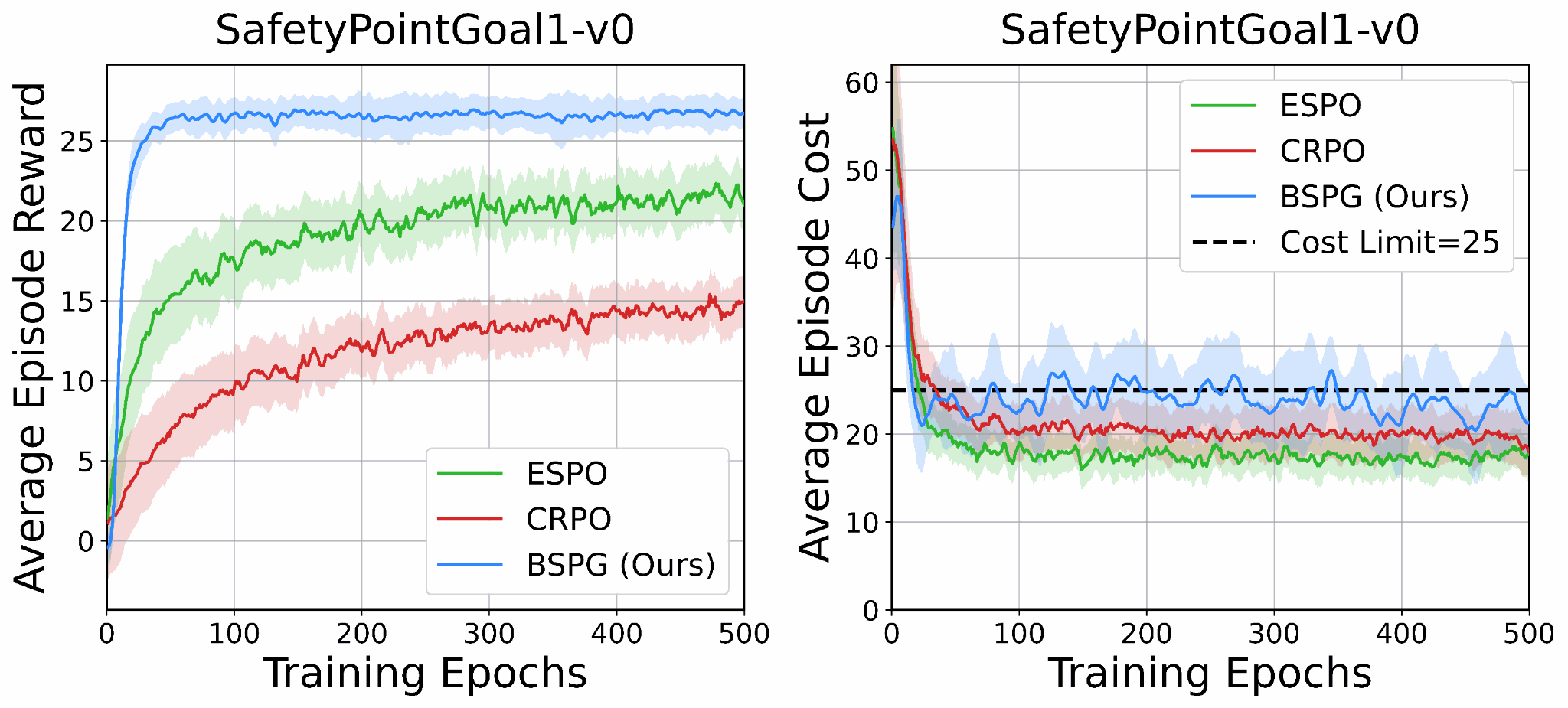}
    \vspace{-1.5em}
    \caption{Results on SafetyPointGoal1-v0.}
    \label{fig:reward_cost}
    \vspace{-3ex}
\end{figure}

\section{Conclusion}
In this paper, we proposed Boundary-Seeking Policy Gradient (BSPG), a safe reinforcement learning method that explicitly drives policy updates toward the active constraint boundary while improving reward along it. By combining a tangential reward term with a signed boundary-attraction term, BSPG encourages boundary-aligned optimization rather than conservative interior solutions. On the theory side, occupancy-measure analysis identifies the active boundary as the global constrained target when no unconstrained reward maximizer is feasible, and the realizability corollary transfers this target to a policy class attaining the exact constrained value. For the ideal exact-gradient update, BSPG drives the signed residual to zero from either side with a finite-horizon $O(1/\sqrt{T})$ bound, and any convergent run reaches a stationary point on the active constraint set, satisfying the KKT conditions when the limit is locally optimal over the feasible set. Empirically, BSPG attains a strong reward--safety trade-off in the evaluated task by making fuller use of the safety budget.
\section*{Acknowledgement}
This work is partially supported by NSF ECCS Award \#2534263.

\bibliographystyle{IEEEtran}

\appendix

\subsection{Proof of Theorem~\ref{thm:optimality}}
\label{sec:proof0}
For a stationary policy $\pi$, define its (normalized) occupancy measure
$\rho_\pi(s,a)=(1-\gamma)\sum_{t=0}^\infty \gamma^t \Pr_{\pi,\mu_0}(s_t=s,a_t=a)$.
The set $\Lambda=\{\rho_\pi:\pi\in\Pi\}$ is a compact convex polytope, and
$\Jr,\Jc$ are linear functionals of $\rho$:
$J_k(\pi)=\frac{1}{1-\gamma}\sum_{s,a}\rho_\pi(s,a)\,k(s,a)$ for
$k\in\{r,c\}$ \cite{Alt_99}; hence the maxima below are attained.

Suppose, for contradiction, that some constrained-optimal $\pi^\star$
satisfies $\Jc(\pi^\star)<d$, and write $\rho^\star=\rho_{\pi^\star}$.
Fix any unconstrained maximizer $\pi_r^\star$ with occupancy measure
$\rho_r$; by hypothesis $\Jc(\rho_r)>d$. If $\Jr(\rho_r)=\Jr(\rho^\star)$,
then $\pi^\star$ would itself be a feasible unconstrained maximizer,
contradicting the hypothesis; hence $\Jr(\rho_r)>\Jr(\rho^\star)$.
For $\lambda\in(0,1]$ let $\rho_\lambda=(1-\lambda)\rho^\star+\lambda\rho_r
\in\Lambda$ (convexity). By linearity,
$\Jr(\rho_\lambda)=(1-\lambda)\Jr(\rho^\star)+\lambda\Jr(\rho_r)
>\Jr(\rho^\star)$, and
$\Jc(\rho_\lambda)=(1-\lambda)\Jc(\rho^\star)+\lambda\Jc(\rho_r)\le d$
for every
$\lambda\le\bigl(d-\Jc(\rho^\star)\bigr)/\bigl(\Jc(\rho_r)-\Jc(\rho^\star)\bigr)$,
a strictly positive threshold. Thus, for small $\lambda>0$, the policy
induced by $\rho_\lambda$ is feasible with strictly larger reward than
$\pi^\star,$ a contradiction. Hence $\Jc(\pi^\star)=d$.
\hfill$\blacksquare$

\subsection{Proof of Theorem~\ref{thm:boundary}}
\label{sec:proof1}
Define $V(\bm{\theta})=\tfrac{1}{2}\delta(\bm{\theta})^2$.
By the chain rule, $\nablath V=\delta\gc$.
The function $V$ is $L_V$-smooth with $L_V=G^2+LD$:
for any $\bm{\theta},\bm{\theta}'\in\RR^p$,
\begin{align}
  \norm{\nablath V(\bm{\theta}')\!-\!\nablath V(\bm{\theta})}
  \leq & \bigl(|\delta'-\delta|\!\cdot\!\norm{\gc'}+|\delta|\!\cdot\!\norm{\gc'-\gc}\bigr) \nonumber \\
  \leq & (G^2+LD)\norm{\bm{\theta}'-\bm{\theta}},
  \label{eq:LV-smooth}
\end{align}
using $|\delta'-\delta|\leq G\norm{\bm{\theta}'-\bm{\theta}}$ (mean value theorem and Assumption~\ref{asm:smooth}),
$\norm{\gc}\leq G$ (Assumption~\ref{asm:smooth}), $\norm{\gc'-\gc}\leq L\norm{\bm{\theta}'-\bm{\theta}}$ (Assumption~\ref{asm:smooth}),
and $|\delta|\leq D$ (Assumption~\ref{asm:regular}); the segment condition in Assumption~\ref{asm:smooth} ensures these bounds hold along the update segments.
By the descent lemma:
\vspace{-6pt}
\begin{equation}
  V_{t+1} \leq V_t + \alpha_t\delta_t\inner{\gc^t}{\gup^t}
  + \frac{L_V\alpha_t^2}{2}\norm{\gup^t}^2.
  \label{eq:V-descent}
\end{equation}
\vspace{-6pt}

Since $\gup=\gperp+\eta\gb$ with exact gradients:

\emph{Tangential term.}
$\inner{\gc}{\gperp}
=\inner{\gc}{\gr-\frac{\inner{\gr}{\gc}}{\norm{\gc}^2}\gc}
=\inner{\gc}{\gr}-\inner{\gr}{\gc}=0.$

\emph{Boundary term.}
$\gb=-\delta_t\gc/(\norm{\gc}+\varepsilon)$ (exact $\delta_t$, exact $\gc$), so:
\begin{equation}
  \delta_t\inner{\gc}{\gb}
  =-\delta_t^2\frac{\norm{\gc}^2}{\norm{\gc}+\varepsilon}
  \leq -\delta_t^2\frac{\mu^2}{G+\varepsilon},
  \label{eq:boundary-drift}
\end{equation}
where we used $\norm{\gc}\geq\mu$ (Assumption~\ref{asm:regular})
and $\norm{\gc}\leq G$ (Assumption~\ref{asm:smooth}).
Therefore:
\vspace{-6pt}
\begin{equation}
  \delta_t\inner{\gc^t}{\gup^t}
  =\eta\delta_t\inner{\gc^t}{\gb^t}
  \leq -\underbrace{\frac{\eta\mu^2}{G+\varepsilon}}_{\mu'}\delta_t^2.
  \label{eq:drift}
\end{equation}
\vspace{-6pt}

We know that
$\norm{\gup}^2\leq 2\norm{\gperp}^2+2\eta^2\norm{\gb}^2
\leq 2G^2+2\eta^2 D^2 \triangleq C_1$,
since $\norm{\gperp}\leq\norm{\gr}\leq G$ and
$\norm{\gb}=|\delta_t|\norm{\gc}/(\norm{\gc}+\varepsilon)\leq|\delta_t|\leq D$.

Substituting the two bounds above into \eqref{eq:V-descent}:
\begin{equation}
  V_{t+1}\leq V_t - \mu'\alpha_t\delta_t^2 + C_R\alpha_t^2,
  \quad C_R=\tfrac{L_V C_1}{2}.
\end{equation}
This is \eqref{eq:lyapunov-main}.

Finally, summing from $0$ to $T-1$:
$\mu'\sum_{t=0}^{T-1}\alpha_t\delta_t^2\leq V_0+C_R\sum_{t=0}^\infty\alpha_t^2<\infty$.
Letting $T\to\infty$ gives $\sum\alpha_t\delta_t^2<\infty$.

To show $\delta_t\to 0$: the recursion $V_{t+1}\leq(1-2\mu'\alpha_t)V_t+C_R\alpha_t^2$
is a Robbins--Monro recursion with summable perturbation. Since $\alpha_t\to 0$,
there exists $N$ with $0\leq 1-2\mu'\alpha_t\leq 1$ for all $t\geq N$; iterating
from $N$ and using $1-x\leq e^{-x}$,
$\prod_{s=N}^{t}(1-2\mu'\alpha_s)\leq\exp(-2\mu'\sum_{s=N}^t\alpha_s)\to 0$,
and the convolution with $\alpha_s^2$ vanishes (standard argument), so
$V_t\to 0$ and $\delta_t\to 0$.

\smallskip
\noindent\emph{Proof of Corollary~\ref{cor:finite-residual}.}
Summing \eqref{eq:lyapunov-main} over $t=0,\ldots,T-1$ gives
$\mu'\sum_{t=0}^{T-1}\alpha_t\delta_t^2\leq V_0+C_R\sum_{t=0}^{T-1}\alpha_t^2$.
Since $\EE[\delta_\tau^2]=\frac{1}{A_T}\sum_{t=0}^{T-1}\alpha_t\delta_t^2$, dividing by $\mu' A_T$ yields \eqref{eq:finite-residual}. With $\alpha_t=\bar{\alpha}/\sqrt{T}$, $A_T=\bar{\alpha}\sqrt{T}$ and $\sum_t\alpha_t^2=\bar{\alpha}^2$, giving the $O(1/\sqrt{T})$ rate. \hfill$\blacksquare$

\subsection{Proof of Theorem~\ref{thm:ascent}}
\label{sec:proof2}
Compute:
$\inner{\gr}{\gperp}
=\inner{\gr}{\gr-\frac{\inner{\gr}{\gc}}{\norm{\gc}^2}\gc}
=\norm{\gr}^2-\frac{\inner{\gr}{\gc}^2}{\norm{\gc}^2}$.
Also:
$\norm{\gperp}^2
=\norm{\gr}^2-2\frac{\inner{\gr}{\gc}^2}{\norm{\gc}^2}
+\frac{\inner{\gr}{\gc}^2}{\norm{\gc}^2}
=\norm{\gr}^2-\frac{\inner{\gr}{\gc}^2}{\norm{\gc}^2}$.
Both expressions are equal and non-negative by Cauchy--Schwarz. Equality holds iff $\Vert \gperp\Vert=0$ iff  $\gr\parallel\gc,$ confirming that $\gr=\alpha\gc$ for some scalar $\alpha$.

\subsection{Proof of Theorem~\ref{thm:kkt}}
\label{sec:proof3}
By Theorem~\ref{thm:boundary}, $\delta_t\to 0$; since $\bm{\theta}_t\to\bm{\theta}^*$ and $\delta$ is continuous, $\delta(\bm{\theta}^*)=0$, i.e., $\bm{\theta}^*\in\cM$.

\textbf{Stationarity.} Suppose, for contradiction, that $\bm{u}\triangleq\gperp(\bm{\theta}^*)\neq\bm{0}$. Since $\norm{\gc}\geq\mu$ on $\Theta$ (Assumption~\ref{asm:regular}), the map $\bm{\theta}\mapsto\gperp(\bm{\theta})$ is continuous at $\bm{\theta}^*$, and the normal component vanishes in the limit, $\norm{\eta\gb(\bm{\theta}_t)}\leq\eta|\delta_t|\to 0$. Hence $\gup(\bm{\theta}_t)=\gperp(\bm{\theta}_t)+\eta\gb(\bm{\theta}_t)\to\bm{u}$, and there exists $N$ such that
$
  \inner{\bm{u}}{\gup(\bm{\theta}_t)}\;\geq\;\tfrac{1}{2}\norm{\bm{u}}^2,
  \qquad\forall t\geq N.
$
Projecting the recursion $\bm{\theta}_{t+1}-\bm{\theta}_t=\alpha_t\gup(\bm{\theta}_t)$ onto $\bm{u}$ and summing from $N$ to $T-1$:
$  \inner{\bm{u}}{\bm{\theta}_T-\bm{\theta}_N}
  =\sum_{t=N}^{T-1}\alpha_t\inner{\bm{u}}{\gup(\bm{\theta}_t)}
  \;\geq\;\tfrac{1}{2}\norm{\bm{u}}^2\sum_{t=N}^{T-1}\alpha_t
  \;\xrightarrow[T\to\infty]{}\;\infty,
$
since $\sum_t\alpha_t=\infty$ (Assumption~\ref{asm:stepsize}). This contradicts $\bm{\theta}_T\to\bm{\theta}^*$, whose left side converges to the finite value $\inner{\bm{u}}{\bm{\theta}^*-\bm{\theta}_N}$. Therefore $\gperp(\bm{\theta}^*)=\bm{0}$, and since $\norm{\gc(\bm{\theta}^*)}\geq\mu>0$ we may solve for $\gr$:
$\gr(\bm{\theta}^*)=\frac{\inner{\gr}{\gc}}{\norm{\gc}^2}\gc=:\lamstar\gc(\bm{\theta}^*)$.
Complementary slackness $\lamstar(\Jc(\pithstar)-d)=0$ is immediate from $\bm{\theta}^*\in\cM$.

\textbf{Dual feasibility under local optimality.}
Suppose $\bm{\theta}^*$ is a local maximizer of $\Jr$ on $\cF$ and, for contradiction, $\lamstar<0$.
From stationarity, $\gr=\lamstar\gc$.  Consider the direction $\bm{v}=-\gc$:
$\inner{\gr}{\bm{v}}=-\lamstar\norm{\gc}^2>0$ (reward increases to first order),
$\inner{\gc}{\bm{v}}=-\norm{\gc}^2<0$ (cost strictly decreases, so the point enters the feasible interior since $\Jc(\pithstar)=d$).
Hence, for sufficiently small $\varepsilon_0>0$, the point $\bm{\theta}^*-\varepsilon_0\gc\in\cF$ attains strictly higher reward,
contradicting local maximality. Therefore $\lamstar\geq 0$, and together with stationarity, primal feasibility, and complementary slackness, $\bm{\theta}^*$ satisfies the KKT conditions of \eqref{eq:cmdp}. \hfill$\blacksquare$

\end{document}